\newtheorem{defi}{Definition}
\newtheorem{prop}{Proposition}
\DeclareMathOperator*{\argmax}{argmax}
\begin{document}

%

%

\twocolumn[

\aistatstitle{The Causal Loss: Driving Correlation to Imply Causation}

\aistatsauthor{ Moritz Willig \And Matej Zečević \And Devendra Singh Dhami \And Kristian Kersting }

\aistatsaddress{ TU Darmstadt \And TU Darmstadt \And TU Darmstadt \And TU Darmstadt, hessian.AI} ]

\begin{abstract}
Most algorithms in classical and contemporary machine learning focus on correlation-based dependence between features to drive performance. Although success has been observed in many relevant problems, these algorithms fail when the underlying causality is inconsistent with the assumed relations. We propose a novel model-agnostic loss function called \emph{Causal Loss} that improves the interventional quality of the prediction using an intervened neural-causal regularizer. In support of our theoretical results, our experimental illustration shows how causal loss bestows a non-causal associative model (like a standard neural net or decision tree) with interventional capabilities.
\end{abstract}

\section{Introduction}
One of the earliest investigation on developing a theory of causality can be found in the work of Arsitotle \citep{evans1959causality,falcon2006aristotle} which brings out his view that providing the relevant cause(s) is a necessary and sufficient condition for atoing a scientific explanation for a given problem. Due to the difficulty in obtaining causation and the early formulation of the De Moivre–Laplace theorem \citep{de1718doctrine}, a special case of central limit theorem, causation was mostly reduced to a special case of correlation, especially in the works of \cite{galton1886regression} and \cite{pearson1892grammar}. On the other side of the spectrum, works by \cite{hume2003treatise}, \cite{wright1934method} and \cite{mackie1974cement} cite causality as the most important factor in study of human reasoning.

This debate of correlation vs causation is also central to machine learning with most algorithms focusing on correlation-based dependence between features to drive performance. Although this has resulted in impressive performance in variety of tasks such as computer vision \citep{krizhevsky2012imagenet,goodfellow2014generative} and natural language processing \citep{hochreiter1997long,vaswani2017attention}, machine learning leaves a lot to be desired when dealing with problems where underlying causality is inconsistent with the assumed relations \citep{altman2015points}. For example, deep neural networks for image classification are heavily dependent on the cross-correlation (same as convolution sans kernel flipping) between features but perform poorly when tasked with extracting causal information. We argue that deep networks are rather about computation depending on what one wants to compute. Thus there has been a lot of argument lately, to combine the power of neural models and causality \citep{vasudevan2021off,scholkopf2021toward,xia2021causal}.

We present the \emph{Causal Loss}, which allows us to explore how much correlation based models can be converted to include underlying causal relationships in the data. We show that the proposed loss function, by using an interventional density estimator, is able to elevate classical discriminative machine learning models from rung 1 to rung 2 of the Pearl causal hierarchy (PCH) \citep{pearl2009causality}. In the same vein, we propose \underline{Ca}usal \underline{S}um-\underline{P}roduct \underline{N}etworks (CaSPN) that estimate a conditional interventional distribution and then prove that CaSPNs can be used as a causal loss. We show that the combination of a classical and causal loss can be used to incorporate causal information into training machine learning models while paying tribute to possible uncertainties about the underlying causal structure. This might be of interest in scenarios where the underlying structural causal model can not be approximated to a sufficient degree or when we want to express concerns about its quality.

We make the following contributions: (1) We propose a loss function that drives correlation based models to be more causal. (2) We introduce a causal probabilistic circuit, CaSPN and prove that it can be used as a causal loss. (3) We show that both differentiable and non-differentiable models benefit from using causal loss function and show that our causal loss gives a strong feedback for training neural classifier. (4) As an important result, we show that using a causal loss for learning a decision tree can result in generation of compact trees while matching the performance of a heavily parameterized classical decision tree.




\section{Structural Causal Models (SCMs)}

The goal of this section is to develop the general notion of a causal loss, as well as to present an applicable neural causal loss. For the general definition we start with describing the required properties of the loss and follow up with the resulting implications for model learning. To present an applicable causal loss we choose a novel type of neural causal model and proof that it meets the requirements to act as a causal loss term.

To capture the causal nature of a data generating process, a model needs to be aware of the underlying causal relationships and its behaviour to changes to that process. According to \cite{peters2017elements} causal systems can be expressed by structural causal models (SCMs). SCMs $\mathfrak{C}$ describe the density $p(\mathbf{x})$ of a joint distribution $P_\mathbf{X}$ over random variables $\mathbf{X} = (X_1,...,X_d)$. The relationships between the variables is captured by a directed acyclic graph $\mathcal{G} = (\mathbf{V}, \varepsilon)$ with a set of vertices $\mathbf{V} \in {1,...,d}$ and directed edges $\varepsilon \in \mathbf{V}^2$. Each variable is described by a structural assignment: $X_i = f_i(\textbf{PA}_i, N_i)$. Where $\textbf{PA}_i$ are the parents of $X_i$ in the graph and $N_i$ are jointly independent noise terms.

We define an intervention $do(Z_i = z_i)$ on the SCM, according to \cite{pearl2009causality}, as the replacement of the structural equation of $X_i$ by a new equation $X_i = \hat{f}_i(\hat{\textbf{PA}}_i, N_i)$. An SCM not only provides an observational distribution, but also entails an interventional distribution that reflects changes in probability density with regard to applied interventions. Having the knowledge and incorporating information about the underlying causal graph and the effects of interventional changes distinguishes causal models from purely correlation based methods and lifts them onto rung two of the Pearl Causal Hierarchy (PCH) (\citep{bareinboim2020pearl}, Def. 9).

\section{The Causal Loss}

The idea of our causal loss is to asses the likelihood of some prediction $\hat{Y}$ given by a model in an interventional setting.
This approach not only allows the transfer of information captured by (possibly intractable) causal models into tractable neural approaches, but it can be shown empirically that applying a causal loss helps in general to improve model performance. Independent of the scenario, we demonstrate that our causal loss can help to bridge the gap between causal and correlation-based models such as the majority of neural networks.

We require the underlying density estimator to operate at least on rung two of Pearl's Causal Hierarchy to guarantee causal nature of our loss. Utilizing Pearls do-calculus \citep{pearl2009causality} we, therefore, require the density estimator to operate on interventional distributions $P(\bm{Y} | \bm{X}, do(Z_j = z_j))$.

\begin{defi} (Causal Loss). A causal loss $\mathfrak{L}^C(\hat{Y}, | \mathfrak{C}, X, do(Z_j = z_j))$ measures the probability $P_\mathfrak{C}(\bm{Y} | \bm{X}, do(Z_j = z_j))$ for some configuration $\hat{Y}$ permitted by a structural causal model $\mathfrak{C}$ under interventions $do(Z_j = z_j)$ given $X$.
\end{defi}

The requirement of incorporating interventional information on the graph lift the loss to rung two of the PCH and makes them causal.

\subsection{Causal Sum-Product Networks}

\begin{figure*}
\centering
\subfloat{{\includegraphics[width=0.13\linewidth]{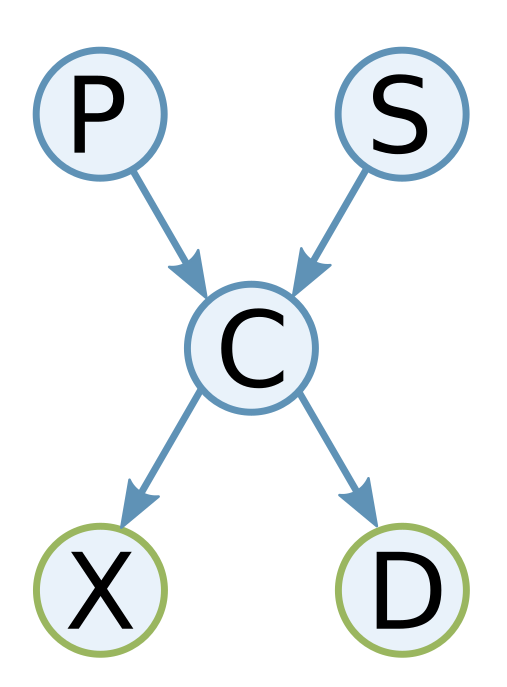}}}
\hspace{1mm}
\subfloat{{\includegraphics[width=0.17\linewidth]{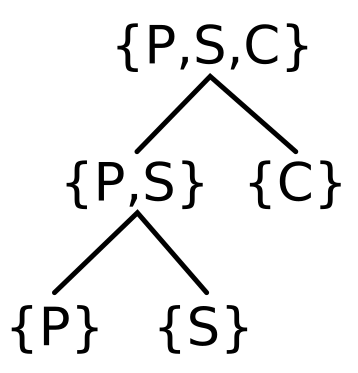}}}
\hspace{1mm}
\subfloat{{\includegraphics[width=0.43\linewidth]{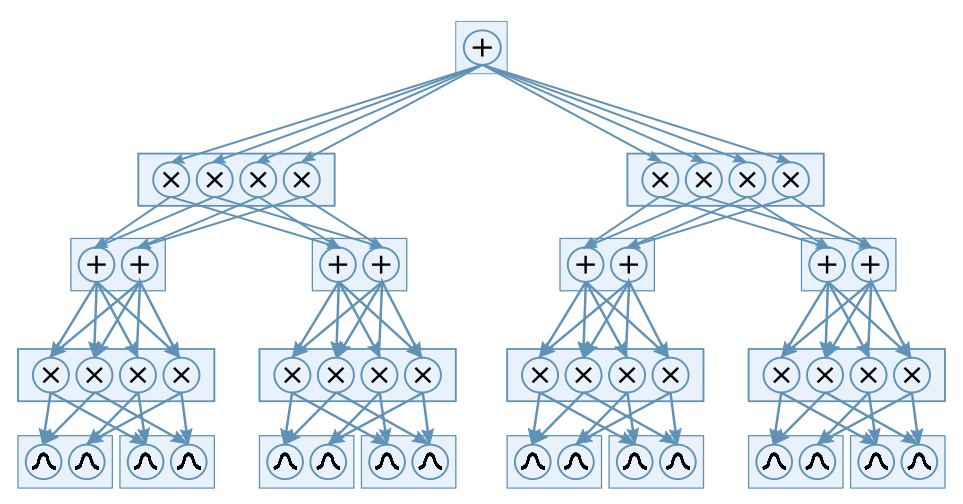}}}
\hspace{5mm}
\vrule
\hspace{3mm}
\subfloat{{\includegraphics[width=0.13\linewidth]{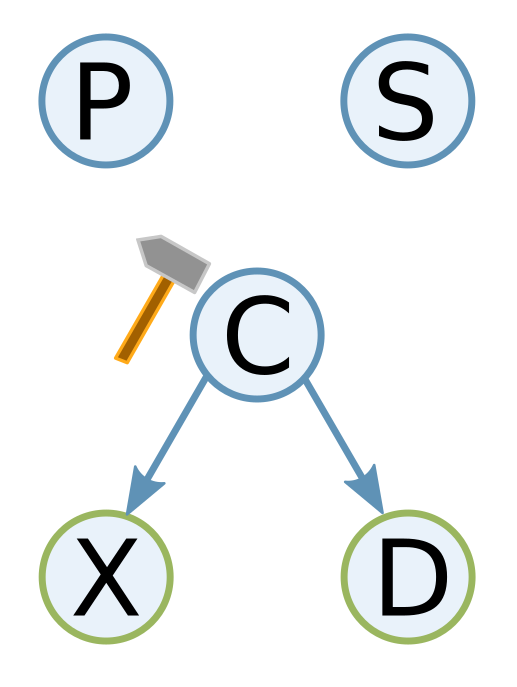}}}


\caption{Constructing  a CaSPN from a structural causal model (SCM) in four steps. From left to right, we start with a SCM (left). Target variables are marked green, and input variables to the SPN are marked blue. Now, a region graph is constructed from the SCM (middle left). The SPN (exemplary) is constructed from the region graph (middle right). Finally, training data is generated by intervening on the input variables (right). (Best viewed in color)}
\label{caSPNConstruction}
\end{figure*}

Since being proposed, Sum-Product Networks (SPNs) \citep{Poon2011} have been continually extended. An important step towards competitive SPNs models was the introduction of random sum-product networks \citep{Peharz2020a}, which allowed SPNs to compete with neural models on a range of computer vision tasks. A further addition was the introduction of conditional SPNs (cSPNs) \citep{shao2020conditional}, which allow to estimate a conditional probability $p(Y|X)$ by utilizing a function approximator $\psi := f(\cdot)$, depending on the conditional, to estimate the weights of the SPN gating nodes. In the light of causality \cite{Zecevic2021a} utilized cSPNs to condition  on interventional information. The resulting interventional sum-product networks (iSPNs) therefore estimate the probability of an intervened distribution $p(\mathbf{Y}|do(Z_i = z_i))$.

To devise the causal loss, we now extend iSPNs towards \textbf{Causal Sum-Product Networks} (CaSPNs). Specifically, CaSPNs extend iSPNs by reintroducing conditional variables $Y$ back into the SPN structure. CaSPNs therefore estimate an  interventional distribution $p_\mathcal{G}(\mathbf{Y}|\mathbf{X}, do(Z_i = z_i))$ conditioned on observational data. As a consequence the inputs of the function approximator $f$ is adapted to accept the intervened causal structure $\mathcal{G}$ in conjunction with the conditional variables $X$ to estimate $\psi := f(\mathcal{G}, \mathbf{X})$. The structure and set of parameters $\psi$ of the SPN remain unchanged.

\textbf{CaSPNs as causal losses}.
WE now argue that using a CaSPN as causal losss is a sensible idea. First, we show that CaSPNs are expressive enough. Then, we formally show that CaSPNs are causal losses. 
\begin{prop} (Expressivity)
Assuming autonomy and invariance, a CaSPN $m(\textbf{G},\textbf{D})$ is able to identify any conditional interventional distribution $p_\mathcal{G}(\mathbf{Y}_i = \mathbf{y}_i | \mathbf{X}_i = \mathbf{x}_i, do(\mathbf{Z}_j = \mathbf{z}_j))$, permitted by a SCM $\mathfrak{C}$ through interventions, with knowledge of the mutilated causal graph $\mathbf{D}$ generated from the intervened SCMs by modeling the conditional distribution $p_\mathcal{\hat{G}}(\mathbf{Y}_i = \mathbf{Y}_i|\mathbf{X}_i = \mathbf{x}_i, \mathbf{Z}_j = \mathbf{z}_j)$.
\end{prop}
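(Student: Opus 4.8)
The plan is to reduce the interventional assertion to a purely observational expressivity statement via Pearl's manipulation theorem, and then close the argument with the universal density-approximation properties of (conditional) sum--product networks together with the universal function-approximation property of the gating map.

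\textbf{Step 1 (interventions become a mutilated observational model).} First I would recall that, under autonomy and invariance of the structural assignments, the intervention $do(\mathbf{Z}_j = \mathbf{z}_j)$ on $\mathfrak{C}$ yields a new SCM $\hat{\mathfrak{C}}$ whose graph $\hat{\mathcal{G}}$ is obtained from $\mathcal{G}$ by deleting every edge into $\mathbf{Z}_j$ and fixing $\mathbf{Z}_j$ to $\mathbf{z}_j$, while all remaining mechanisms $f_i$ are untouched (the truncated-factorization / manipulation theorem \citep{pearl2009causality,peters2017elements}). Hence the target interventional density equals an observational conditional of $\hat{\mathfrak{C}}$, that is $p_\mathcal{G}(\mathbf{Y}_i = \mathbf{y}_i \mid \mathbf{X}_i = \mathbf{x}_i, do(\mathbf{Z}_j = \mathbf{z}_j)) = p_{\hat{\mathcal{G}}}(\mathbf{Y}_i = \mathbf{y}_i \mid \mathbf{X}_i = \mathbf{x}_i, \mathbf{Z}_j = \mathbf{z}_j)$, and the mutilated graph is precisely the side information $\mathbf{D} = \hat{\mathcal{G}}$ passed to the CaSPN. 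It therefore suffices to show that $m(\mathbf{G},\mathbf{D})$ can realize this rung-one quantity, for every admissible $j$ and $\mathbf{z}_j$ at once.

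\textbf{Step 2 (the SPN scaffold is expressive enough).} Next I would argue that the SPN structure built from the region graph of $\mathbf{G}$ (the construction in Fig.~\ref{caSPNConstruction}) can encode any distribution that is Markov to an arbitrary mutilation $\mathbf{D}$ of $\mathbf{G}$: edge deletion only adds conditional independencies, so the family of distributions Markov to $\mathbf{D}$ is contained in the family Markov to $\mathbf{G}$, and SPNs of this type are universal representers of the latter in the sense of \citep{Poon2011,Peharz2020a}. Consequently, for each triple $(\mathbf{D},\mathbf{x}_i,\mathbf{z}_j)$ there exists a choice of gating weights $\psi^\star$ making the SPN output equal $p_{\hat{\mathcal{G}}}(\mathbf{y}_i \mid \mathbf{x}_i, \mathbf{z}_j)$.

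\textbf{Step 3 (the gating map produces those weights).} Finally I would invoke that a CaSPN computes its weights as $\psi = f(\mathbf{D},\mathbf{X})$ with $f$ a universal function approximator whose arguments — following the cSPN/iSPN design \citep{shao2020conditional,Zecevic2021a} — are the mutilated structure $\mathbf{D}$ together with the conditioning variables ($\mathbf{X}_i$, and the intervention value $\mathbf{z}_j$ carried by $\mathbf{D}$). By universal approximation, $f$ can emit the weights $\psi^\star$ of Step~2 on every such input, so $m(\mathbf{G},\mathbf{D})$ evaluates to $p_{\hat{\mathcal{G}}}(\mathbf{y}_i \mid \mathbf{x}_i, \mathbf{z}_j) = p_\mathcal{G}(\mathbf{y}_i \mid \mathbf{x}_i, do(\mathbf{Z}_j = \mathbf{z}_j))$; since $j,\mathbf{z}_j$ were arbitrary this identifies every conditional interventional distribution permitted by $\mathfrak{C}$. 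The main obstacle is the word ``simultaneously'' spanning Steps~2 and~3: a single fixed SPN scaffold and a single function $f$ must cover the whole family of mutilations. I expect to resolve this exactly as sketched — mutilation only coarsens the dependence structure, so the observational scaffold is a safe over-approximation, and all graph-dependent behaviour is offloaded onto the gating network, which is rich enough by assumption; a secondary, purely technical point is to fix a finite encoding of the (finitely many) mutilations of $\mathbf{G}$ so that universal approximation of $f$ over such inputs is literally applicable.
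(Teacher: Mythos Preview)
Your proposal is correct and follows essentially the same approach as the paper: reduce the interventional query to an observational conditional in the mutilated graph via do-calculus, then invoke SPN universal approximation \citep{Poon2011}. Your Steps~2--3 (the Markov-coarsening argument and the gating-network universality to handle all mutilations simultaneously) are a welcome elaboration of points the paper's two-sentence proof leaves implicit, but the core skeleton is the same.
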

\begin{proof}
 It follows from the definition of the do-calculus \citep{pearl2009causality} that
 $p_\mathcal{G}(\mathbf{Y}_i = \mathbf{y}_i | \mathbf{X}_i = \mathbf{x}_i, do(\mathbf{Z}_j = \mathbf{z}_j)) = p_\mathcal{\hat{G}}(\mathbf{Y}_i = \mathbf{y}_i | \mathbf{X}_i = \mathbf{x}_i, \mathbf{Z}_j = \mathbf{z}_j)$ where $\mathcal{\hat{G}}$ is the mutilated causal graph according to intervention $do(\mathbf{Z}_j = \mathbf{z}_j)$. The resulting joint probability, given the mutilated graph $\mathcal{\hat{G}}$, can be approximated by a SPN \citep{Poon2011}.
\end{proof}
\begin{prop} A CaSPN $m(\mathbf{G},\mathbf{X})$ is a causal loss.
\end{prop}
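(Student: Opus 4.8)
The plan is to verify that a CaSPN satisfies, point for point, the requirements that Definition~1 (together with the surrounding discussion) places on a causal loss: that it (i) returns a quantity of the form $P_{\mathfrak{C}}(\bm{Y}\mid \bm{X}, do(Z_j=z_j))$, (ii) does so for configurations $\hat{Y}$ actually permitted by the SCM $\mathfrak{C}$ under the intervention, and (iii) operates on rung two of the PCH by genuinely incorporating graph-mutilation information rather than merely observational conditioning. The argument is essentially a definitional check that bottoms out in Proposition~1.

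First I would recall the construction of a CaSPN $m(\mathbf{G},\mathbf{X})$: the function approximator $f$ receives the (possibly mutilated) causal graph $\mathbf{G}$ together with the conditioning variables $\mathbf{X}$ and outputs the gating weights $\psi = f(\mathbf{G},\mathbf{X})$ of the sum nodes, so that the circuit evaluates $p_{\mathcal{G}}(\mathbf{Y}\mid\mathbf{X}, do(Z_j=z_j))$. Since the intervention enters only through the graph argument, the key step is to argue that feeding $m$ the mutilated graph $\hat{\mathcal{G}}$ is equivalent to answering the interventional query; this is exactly the content used in the proof of Proposition~1, namely the do-calculus/truncated-factorization identity $p_{\mathcal{G}}(\mathbf{Y}\mid\mathbf{X}, do(Z_j=z_j)) = p_{\hat{\mathcal{G}}}(\mathbf{Y}\mid\mathbf{X}, Z_j=z_j)$, whose right-hand side is an ordinary conditional in the mutilated model and hence representable by the underlying SPN. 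Thus requirement~(i), and the representability half of~(ii), follow directly from Proposition~1.

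Next I would close the gap on the remaining points. For~(ii), ``permitted by $\mathfrak{C}$'' should be read as: the density is supported on the set of outcomes the intervened SCM can realize; since the CaSPN is trained on data drawn by sampling the intervened SCM (cf.\ the construction in Figure~\ref{caSPNConstruction}), the estimated density is supported exactly there, so any $\hat{Y}$ at which $m$ assigns positive mass is a configuration $\mathfrak{C}$ permits under $do(Z_j=z_j)$. For~(iii), I would observe that $m$'s output is a non-trivial function of $\mathbf{G}$: distinct interventions $do(Z_j=z_j)$ and $do(Z_j=z_j')$ induce distinct mutilated graphs, hence distinct inputs to $f$, hence in general distinct predictive densities — precisely the behavioural signature that lifts the loss to rung two rather than rung one of the PCH. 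Combining these observations, $m(\mathbf{G},\mathbf{X})$ realizes the map $(\hat{Y},\mathfrak{C},X,do(Z_j=z_j)) \mapsto P_{\mathfrak{C}}(\bm{Y}\mid\bm{X}, do(Z_j=z_j))$ demanded by Definition~1, so $m$ is a causal loss.

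I expect the main obstacle to be conceptual rather than computational: the statement is a definitional check, so the real work lies in pinning down what ``measures the probability $\ldots$ permitted by a SCM'' means formally (support, identifiability, and the precise role of the graph input) and in making the reduction to Proposition~1 airtight — in particular arguing that nothing is lost by passing $f$ the mutilated graph $\mathbf{G}$ rather than the full SCM $\mathfrak{C}$, i.e.\ that the mutilated graph plus observational conditioning carries all the information the interventional query requires. Once that reduction is granted, the conclusion is immediate.
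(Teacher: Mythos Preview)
Your proposal is correct and follows essentially the same route as the paper: both arguments are definitional checks that invoke Proposition~1 to identify the CaSPN's output with the interventional conditional $p_{\hat{\mathcal{G}}}(\mathbf{Y}\mid\mathbf{X},Z_j=z_j)$, observe that this places the model on rung two of the PCH, and conclude that Definition~1 is satisfied. The only minor difference is that the paper opens by explicitly citing the CSPN universal-approximation result to justify that any such conditional can be represented, whereas you fold this into the ``representable by the underlying SPN'' step; your additional remarks on support and on the graph input being a non-trivial argument are elaborations not present in the paper's terser proof.
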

\begin{proof}
  Because CaSPNs are CSPNs, they can approximate any conditional probability $p_\mathcal{\hat{G}}(\mathbf{Y}|\mathbf{X}, \mathbf{Z}_j = \mathbf{z}_j)$ entailed by an SCM $\mathfrak{C}$ \citep{shao2020conditional}. From Proposition 1 it follows that CaSPNs are acting on interventional distributions according Pearls do-calculus \citep{pearl2009causality}. This places them on rung two of the Pearl Causal Hierarchy \citep{pearl2019seven}. Approximating the conditional distribution $p^\mathfrak{C}_\mathcal{\hat{G}}(\mathbf{Y}|\mathbf{X}, \mathbf{Z}_j = \mathbf{z}_j)$ of a structural causal model under intervention fulfills the definition of a causal loss.
\end{proof}
That is, we now have a candidate for a causal loss at hand. Let us now show how to use a CaSPN as a causal loss in practice. 

\subsection{Using a CaSPN as Causal Loss}
To deploy a causal loss we are required to provide a CaSPN that approximates the target distribution $p_\mathcal{\hat{G}}(\mathbf{Y}_i = \mathbf{Y}_i|\mathbf{X}_i = \mathbf{x}_i, \mathbf{Z}_j = \mathbf{z}_j)$.
As defined previously, a CaSPN is an extension of an SPN. To adapt the SPN to conditional variables we make use of a neural network approximator to predict the weights of the individual sum and leaf nodes, as proposed in conditional-SPNs \citep{shao2020conditional}. For interventional distributions \cite{Zecevic2021a} proposed to pass the adjacency matrix of the intervened causal graph $\mathcal{\hat{G}}$ as a condition into the neural network. We follow a similar approach for CaSPNs and reintroduce the observed variables as conditionals, which means in practice, that we concatenate the adjacency matrix and observed variables and feed them into the neural network to predict the SPN weights. This yields a CaSPN which estimates the conditional probability $p^\mathfrak{C}_\mathcal{\hat{G}}(\mathbf{Y}|\mathbf{X}, \mathbf{Z}_j = {z}_j)$. The obtained CaSPN can then be used as a causal loss.


Figure~\ref{caSPNConstruction} presents the four steps of CaSPN construction (left to right). In the first step we split the variables of the SCM into conditional (marked in blue) and target variables (green). The conditional variables, together with the adjacency matrix, will be passed into the neural weight estimator at training time. The set of target variables will be applied to the SPN leaf nodes. This means, in return, that we must construct our SPN in a way that will approximate the set of target variables given the adjacency matrix and observed variables.

Since CaSPN are random SPNs, the model structure is determined by randomly generated region graph. Each region graph starts out with the root region which contains the set of all target variables. In a recursive process, starting at the root region, the variables at each region are split into a balanced partition of two mutually exclusive sets of variables. For both of the sets a new child region is created. Each of these new regions covers only the the variable splits. Both child regions attached to the current node. The process is repeated until only a single variable is left in a region or a predefined recursion depth is reached. For our CaSPN we define a maximum recursion depth of two.

In a third step the SPN structure is constructed from the region graph. For every leaf of the region tree, a leaf node with 4 Gaussian distributions is created. For each parent region multiple product nodes are created so that every leaf output from one split is multiplied by a leaf output from the other by one of the product nodes. Then, four sum nodes are created per region where each sum node computes the weighted sum of all product nodes of the region. This process of creating product and sum nodes is repeated until root region is reached, for which only a single sum node is created.

As a last step we generate training data from our SCM. For each SCM we sample 100,000 samples from the unintervened SCM as well as 100,000 samples for every variable intervention. All sampled interventions are perfect atomic, which means that only a single variable is intervened at a time and that this intervention cuts all edges to the parent nodes in the SCM. The intervened variable is set to a uniform distribution.


For training the CaSPN is trained to estimate the log likelihood of the data. The adjacency matrix and the observed variables of the samples are passed onto the weight estimator. With the predicted weights and given the real class configuration the CaSPN then predicts the log-likelihood. The optimizer then tries to minimize the negative log likelihood by adjusting the parameters of the CaSPN.

\subsection{Training with a Causal Loss}
In principle, we can train any predictive machine learning model with the proposed causal loss. We present the training process of 2 models, namely, neural networks and decision trees. We choose these models since they have contrasting properties. Decision trees are non-differentiable and white-box whereas neural networks are differentiable and black-box.

\begin{figure*}
\centering
\includegraphics[width=0.9\linewidth]{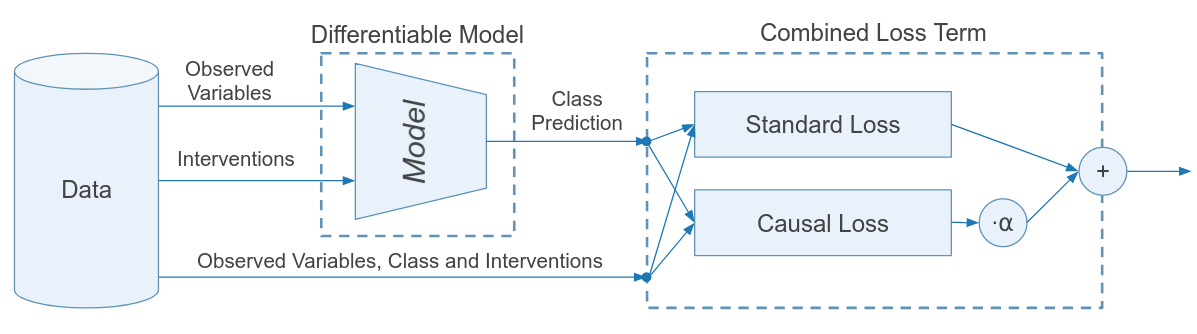}

\caption{Setup for neural network training with causal loss.}
\label{causalLossSetup}
\end{figure*}


\subsubsection{Training a Neural Network} 
Since CaSPNs are differentiable models, they can be used for training with any gradient descent optimizer. Using the causal loss and a standard loss, neural network training is performed in an end-to-end fashion as shown in Figure~\ref{causalLossSetup}. The observed variables and the mutilated graph, in form of an adjacency matrix, are concatenated and passed onto the model. The observed variables, intervention data and real class are passed along to the losses. While a standard loss such as cross entropy or mean squared error depend on the predicted and real class, the causal loss depends on the interventional information, observed variables and class prediction. Depending on the setup the losses can be used in combination or stand-alone. For combined training setup, the causal loss is multiplied by a weighting term $\alpha$.

When applying a causal loss for a classification, the causal loss predicts the highest likelihood when the model is predicting the most likely configuration of $\hat{Y}$ according to the distribution $p^\mathfrak{C}_\mathcal{\hat{G}}(\mathbf{Y}|\mathbf{X}, \mathbf{Z}_j = \mathbf{z}_j)$. For training with a causal loss we try to maximize $\mathfrak{L}^C(\hat{Y}, | \mathfrak{C}, X, Z)$.


\subsubsection{Training a Decision Tree} Using the causal loss with decision tree is a more involved process. For this purpose we present a new scoring function that utilizes a neural causal model. 




\textbf{Causal Decision Score.} The construction of decision trees requires a measure that indicates for each node which decision should be preferred for splitting the data. The Gini index is an often used indicator for measuring the (im)purity of a sample set, where an higher value indicates a better separation of the data.

Taking advantage of the neural causal density estimator at hand, we propose a Causal Decision Score (CDS) which measures to average probability of a split resulting in a correct classification. To do so we revisit the definition of CaSPNs and remember that they estimate the conditional probability $p^\mathfrak{C}_\mathcal{\hat{G}}(\mathbf{Y}|\mathbf{X}, \mathbf{Z}_j = \mathbf{z}_j)$. For each node in the tree we are given a set of samples $(\mathbf{X},\mathbf{Y})$. The likelihood of a correct prediction at each given node will be maximized if we predict the most probable class $\textbf{C}$ of $\mathbf{Y}$. Assuming a two-fold split of the data according to some criterion $S_j(x, y) \in \{0, 1\} | j \in {X_0, ..., X_N}$, where 1 indicates that $S_j(x, y)$ is  satisfied and 0 indicates otherwise. Every variable $X_j \in \textbf{X}$ is considered as a criterion $S_j(x, y)$ and thereby we obtain new exclusively disjunct subsets:
\[
\mathbf{X}_j := \{(x_i,y_i) \in (\mathbf{X},\mathbf{Y}) | S_j(x_i,y_i)\}
\]

where a data point $(x, y)$ is included in subset $\mathbf{X}_0$ iff $S_j(x, y) = 0$ and in subset $\mathbf{X}_1$ iff $S_j(x, y) = 1$. For each of the two subsets $(\mathbf{X}_i,\mathbf{Y}_i) | i \in {0, 1}$ a new most probable class $\textbf{c}_{i,j}$ is predicted.


For building decision trees, we would like our score to prefer the criterion $S(x, y)$ for data splitting that maximizes the probability of the data split $\mathbf{X}_i$ given the predicted most probable class $\textbf{c}_i$.

\begin{defi} (Causal Decision Score)
 A Causal Decision Score $D(\textbf{X}, c_{j}) \in {0, 1}$ estimates the average likelihood of a set of data $\textbf{X}, \textbf{Z}$ according to a causal loss, given the class predictions $\textbf{c}$ of each sample: \[D(\textbf{c}, \textbf{X}, \textbf{Z}) := \frac{1}{N} \sum_{n = 0}^{N} \mathfrak{L}^C(\hat{c_n}, | \mathfrak{C}, X_n, Z_n) \] $N$ is the number of samples and $\mathfrak{L}^C(\hat{c_n}, | \mathfrak{C}, X_n, Z_n)$ is the causal loss.
\end{defi}


Choosing the criterion $S_{\hat{j}}(x, y)$ such that $\hat{j} := \argmax_j D(\textbf{X}, c_j)$ where $x_i \in \textbf{X}_j$ iff $S_j(\textbf{X}, \textbf{Y}) = 1$ maximizes the weighted average probability $p(\textbf{X}_j|\textbf{Y}_j)$ for a correct classification of the split.

While the Gini index aims to improve the purity of the data sub-splits, CDS aims to improve the probability of the data given the most probable class. This allows the incorporation of the underlying causal intervention information of the data into the score term and therefore lift the CDS to rung two of the Pearl Causal Hierarchy. Since we are interested in the quality of the resulting tree from our neural causal scoring function, we will inspect the tree right after its initial construction, thereby avoiding potential artifacts in the tree structure resulting from further refinements.

\section{Empirical Evaluation}
Through our extensive empirical evaluations, we aim to answer the following research questions: 
\begin{enumerate}
    \item[] \textbf{Q1.} How effective is the proposed causal loss for training discriminative machine learning models and obtain correct predictions?
    \item[] \textbf{Q2:} Can a combination of standard loss function(s) and causal loss result in more discriminative models? 
    \item[] \textbf{Q3:} Can we train contrasting set of models i.e. white-box as well as black-box models using causal loss?
    \item[] \textbf{Q4:} What does  decision tree learning reveals about the training with causal loss?
\end{enumerate}
We present the application of causal loss for two classification settings and compare the performance of causal loss in different stand-alone and combined settings and show that the resulting models meet the performance of standard trainings. In a second setting we display that causal loss can help to improve linear classifiers which is manifested in the models' performance as well as in the model structure.

\subsection{Experimental protocol}

For the first experiment a multi-layer perceptron (MLP) is trained on the ASIA \citep{Lauritzen1988}, CANCER, EARTHQUAKE \citep{korb2010bayesian}. Additionally we will test the performance on our newly deviced Causal Health Classification (\textit{CHC}) dataset, which adds three Bernoulli \textit{Diagnosis} variables in a one-hot configuration to the existing Causal Health Dataset \citep{Zecevic2021a}. For each of the first three datasets we pick two of the random variables as targets \footnote{Target variables for the different datasets: ASIA (X,D); EARTHQUAKE (J,M); CANCER (X,D); CHC ($D_1,D_2,D_3$)}. For CHC the target variables are the newly added diagnosis variables. Please refer to the appendix for the SCM and structural equations of the new dataset.

\begin{table*}[ht]
	\caption{Comparison of classification accuracies of CaSPNs and NNs trained on ground truth data using a standard loss (Cross entropy for CHC and MSE for ASIA, CANCER and EARTHQUAKE) and NNs using a pure causal loss or a combinations of standard and causal loss. Trained with Adam over 80 epochs, learning rate 1e-5. Best results are shown in bold. For $\mathbf{S} + \alpha \cdot \mathbf{C}$ the best performing $\alpha$ is shown as subscript.}
	\begin{center}
		\begin{tabular}{l|cc|cccccc}
\toprule
\multicolumn{1}{r|}{\textbf{Model}} & \multicolumn{2}{c|}{\textbf{caSPN}} & \multicolumn{6}{c}{\textbf{Neural Network}} \\
\multicolumn{1}{r|}{\textbf{Loss}}  & \multicolumn{2}{c|}{} & \multicolumn{2}{c}{\textbf{Standard Loss (S)}} & \multicolumn{2}{c}{\textbf{Causal Loss (C)}} & \multicolumn{2}{c}{$\mathbf{S} + \alpha \cdot \mathbf{C}$} \\
\midrule
\textbf{Data set}          & Mean  & Std  & Mean  & Std  & Mean  & Std   & Mean  & Std   \\
\midrule
Causal Health Class. & 80.19 & 0.13 & 79.92 & 0.05 & 78.17 & 0.08  & \textbf{80.13}\textsubscript{0.01} & 0.06  \\
ASIA                 & 83.97 & 1.50 & 82.90 & 0.07 & \textbf{84.72} & 0.05 & \textbf{84.72}\textsubscript{0.02} & 0.06  \\
CANCER               & 56.34 & 0.13 & \textbf{56.34} & 0.13 & 52.61 & 0.09  & 56.33\textsubscript{0.001} & 0.13  \\
EARTHQUAKE           & 85.13 & 0.05 & \textbf{85.13} & 0.05 & 74.88 & 0.07  & \textbf{85.13}\textsubscript{0.001} & 0.05  \\
\bottomrule
		\end{tabular}
	\end{center}
	\label{tab:neural_net}
\end{table*}

\begin{table*}[ht]
	\caption{Accuracy evaluation for decision trees constructed on different datasets for altering scoring functions. "Gini Index scikit" indicates a DT learned by using the scikit-learn library, which makes use of a more complex approach base on post-pruning than our baseline algorithm. We also tried the combination loss for DTs but it shows no improvement.}
	\label{tab:dtrees}
	\begin{center}
		\begin{tabular}{l|llll|ll}
\toprule
{}     & \multicolumn{2}{c}{\textbf{Gini Index}} & \multicolumn{2}{c|}{\textbf{Causal Loss Score}} & \multicolumn{2}{c}{\textbf{Gini Index scikit}} \\
\textbf{Dataset}     & Mean  & Std  & Mean  & Std  & Mean  & Std   \\
\midrule
ASIA                 & 44.56 & 0.10 & \textbf{84.72} & 0.06 & \textbf{84.72} & 0.13 \\
CANCER               & 48.72 & 0.17 & \textbf{56.34} & 0.13 & \textbf{56.34} & 0.13 \\
EARTHQUAKE           & 74.52 & 0.12 & \textbf{85.13} & 0.05 & \textbf{85.13} & 0.05 \\
\bottomrule
		\end{tabular}
	\end{center}
\end{table*}
\textbf{General setup}. The neural estimator for the CaSPN weights $\psi$ is a MLP. The MLP is constructed of a 'base' consisting of 3 linear layers with ReLU activations \citep{nair2010rectified} and final dropout layer \citep{JMLR:v15:srivastava14a}. The result is fed into two heads that output the final leaf and sum weights for the CaSPN. The heads repeat the structure of the base except that we leave out the dropout and ReLu after the last layer.

As reference neural network (NN) baselines in our experiments we use the same architecture as the CaSPN weight estimator. For classification we only attach one head that directly predicts each binary variable in a one-hot encoding.

For optimizing the models in this section we make use of the Adam optimizer \citep{kingma2014adam} with a start learning rate of \num{1e-5}. The value was obtained by probing multiple learning rates manually. For a quantitative consideration of the results every experimental setup is carried out over 80 epochs for five different seeds.

\textbf{Creating intervened datasets}. In consideration of creating a causal environment, we create data by sampling from the SCMs with and without interventions. We consider "perfect atomic" interventions on all conditional variables. Where "atomic" means that an intervention is carried out on a single variable at a time. And "perfect" indicated the replacement of the structural equation of with a uniform distribution, which cuts all causal dependency on the parent variables.

For a model to be able to answer interventional queries according to the PCH, it needs to incorporate interventional information (\citep{bareinboim2020pearl}, Cor. 1). Subsequently training with a causal loss can only induce causal information into the trained model \textit{iff} provided with interventional information. Being demonstrated as a feasible solution for iSPNs \citep{Zecevic2021a}, we provide the adjacency matrix of the causal graph together with the conditional variables to the weight estimator of the CaSPN. For setups with direct NN classification the adjacency matrix and conditional variables are fed directly to the NN.

\begin{figure*}
\subfloat{{\includegraphics[width=0.24\linewidth]{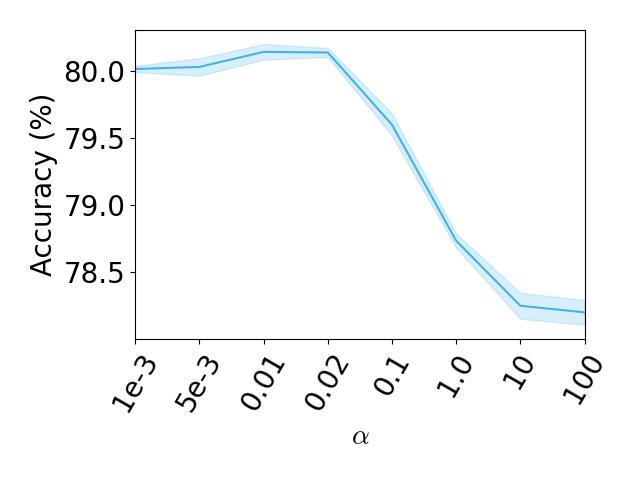}}}
\subfloat{{\includegraphics[width=0.24\linewidth]{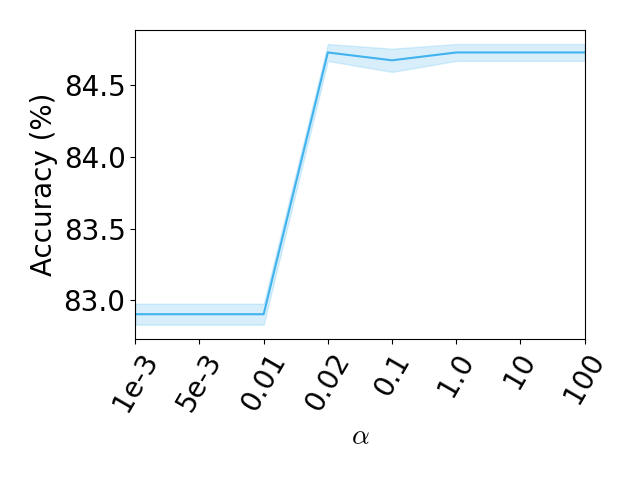}}}
\subfloat{{\includegraphics[width=0.24\linewidth]{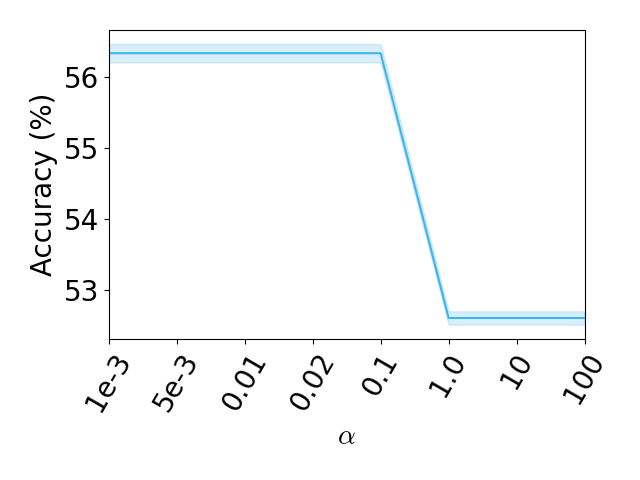}}}
\subfloat{{\includegraphics[width=0.24\linewidth]{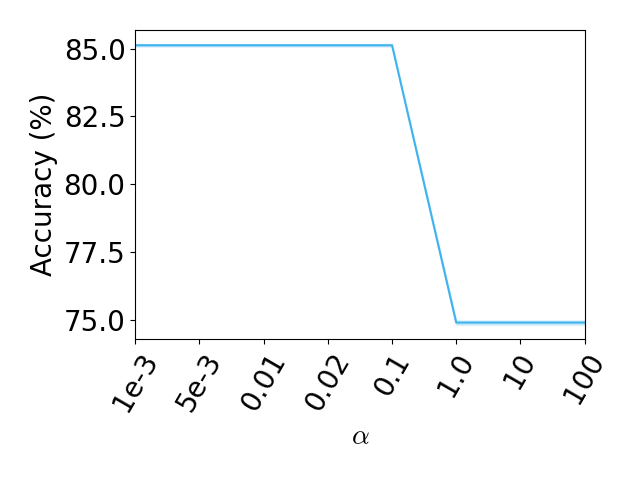}}}
\vspace*{-4mm}
\centering
\caption{Training of a neural network using a combined loss of cross entropy (for CHC) or MSE (for ASIA, CANCER, EARTHQUAKE) plus a causal loss. Left to right are the plots for CHC, ASIA, CANCER and EARTHQUAKE. $\alpha$ indicates the factor by which the causal loss is added to the CE or MSE.}
\label{combinedLossTrainingCHC}
\end{figure*}

\begin{figure*}
\centering
\vspace*{-3mm}
\subfloat{{\includegraphics[width=0.25\linewidth]{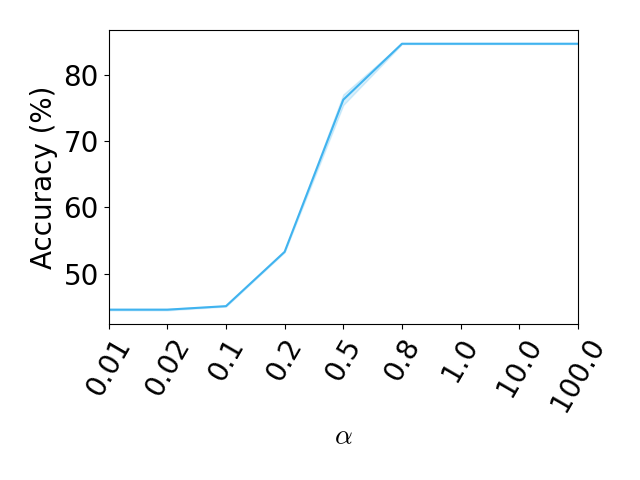}}}
\hspace{0.03\linewidth}
\subfloat{{\includegraphics[width=0.25\linewidth]{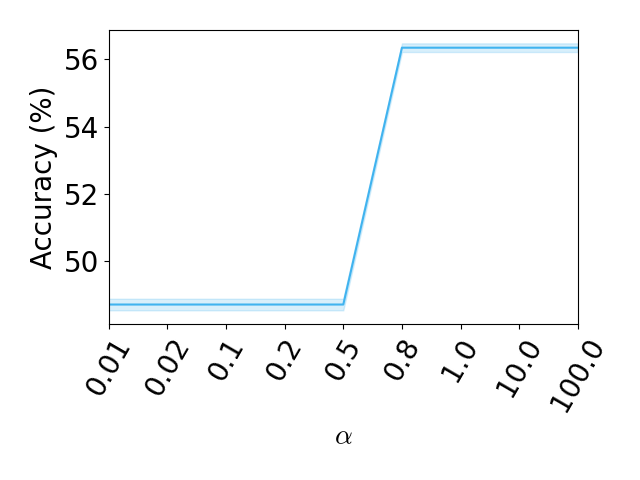}}}
\hspace{0.03\linewidth}
\subfloat{{\includegraphics[width=0.25\linewidth]{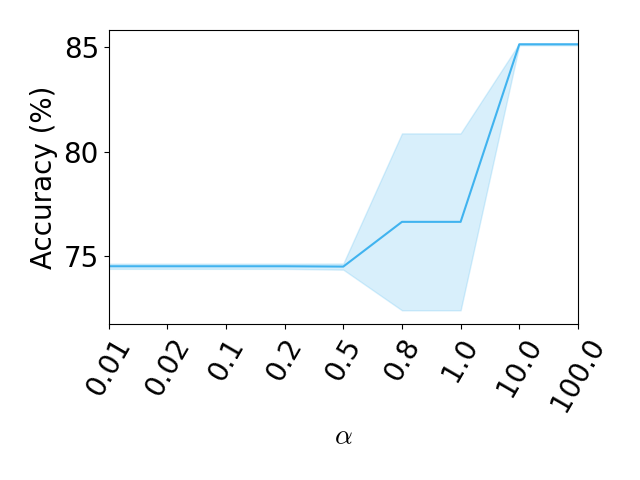}}}
\vspace*{-4mm}
\caption{Accuracies of Decision Trees using a combined loss of the Gini index plus our causal decision score. $\alpha$ indicates the factor by which the causal loss is added to the Gini index.}
\label{combinedDTTraining}
\vspace{-0.2in}
\end{figure*}
\textbf{(Q1. Causal Loss effectiveness for prediction)} In this setup we aim to maximize the accuracy of CaSPNs $m(\textbf{G}, \textbf{D}, \theta)$ and NNs $n(\theta)$ by optimizing the weights $\theta$ of the respective models. To test our causal loss we proceed to train a neural network (considered baselines interventional SPNs and a neural network trained with a standard loss) and a decision tree (considered baselines decision tree with Gini index) with a causal loss and compare it to various baselines.

Table \ref{tab:neural_net} shows the results for training a neural network with causal loss. It can be seen that the neural network trained with the causal loss performs comparably to the baselines and outperforms in the ASIA data set. We observe that CaSPNs with cross entropy is performing best in 3 out of the 4 data sets. We notice that the results on CANCER and EARTHQUAKE are the same for CaSPN and standard neural network. We expect the models learn the optimal policy, given the simple structure and random nature of the SCM. 

Table \ref{tab:dtrees} shows the results for training a decision tree with the causal decision score. A pure inspection of the model performance reveals that our baseline algorithm with a causal loss score achieves high accuracies across all data sets, learning the optimal decisions. Decision tree construction with Gini index is not able to reach the performance of our causal loss score models. Due to the high accuracy of our causal decision score we observe the best performance is gained when giving a large weight to the causal loss or using causal loss only. 

Note that we use  two implementations of decision tree: 1 from scikit-learn library in python\footnote{https://scikit-learn.org/stable/modules/tree.html} (represented as Gini Index scikit in Tab.\ref{tab:dtrees}) and other our own (represented as Gini Index). This is necessary due to the fact that scikit decision tree uses compiled python code and it is a tedious process to add the proposed causal loss. We can see that the (represented as Gini Index scikit matches the performance of the causal loss but involves some extra steps such as pruning and uses an optimized CAR decision tree learner. The fact that our causal loss function is able to reach the optimum performance without requiring these extra steps shows that training with causal loss i.e. using causal decision score as the splitting criteria is highly effective.

Using a combinations of Gini index and causal decision score might still be of interest for the scenario of partial unknown SCMs, as the graphs show a smooth transition between the Gini index and Causal Decision Score accuracies for all three data sets.

Thus, the causal loss provides sufficient feedback to the model during training to achieve good performance in causal classification setting answering \textbf{Q1}: causal loss is effective in obtaining correct predictions while also driving correlation based models towards causality.

\begin{figure*}[t]
\includegraphics[width=0.4\textwidth]{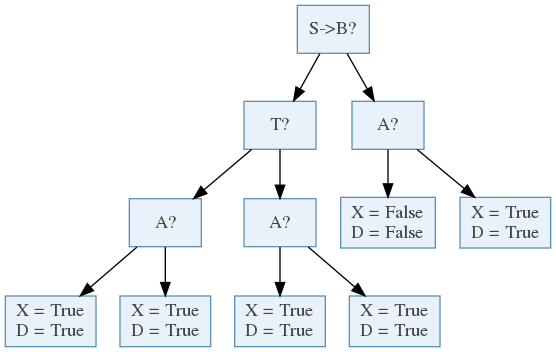}
\includegraphics[width=0.5\textwidth]{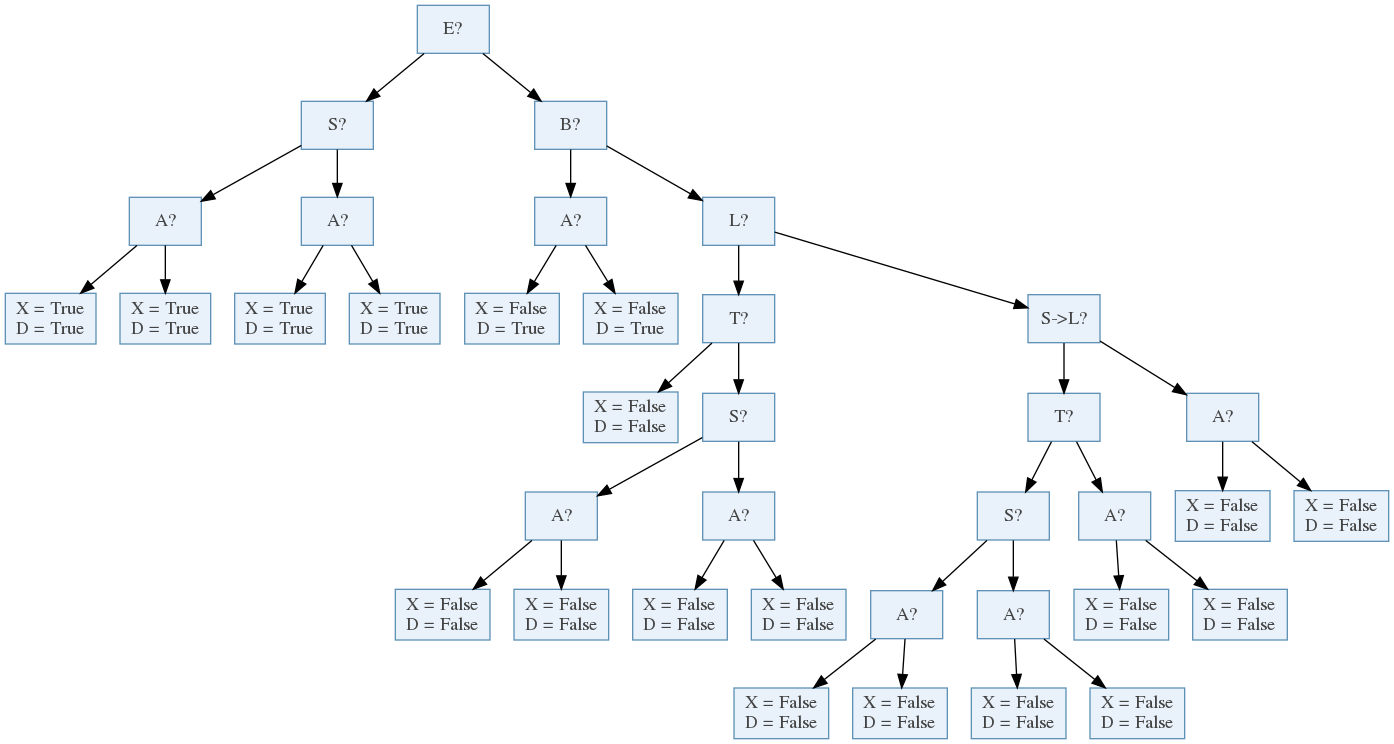}
\includegraphics[width=\textwidth]{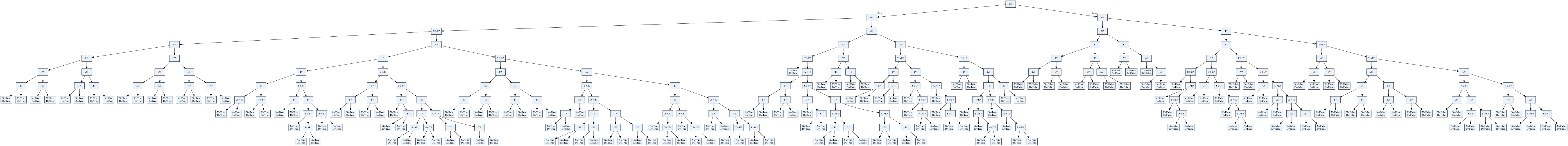}
\caption{Decision trees constructed for the ASIA dataset using Gini index (left) and Causal Loss (right) with our baseline algorithm. The tree constructed with scikit (bottom) reaches the accuracy of our causal loss tree but needs to be much larger in size. Each node either marks the decision over a boolean variable "X?" or the existence of an edge from the adjacency matrix "X$\rightarrow$Y?". Leaf nodes contain the final class decision.}
\centering
\label{dtLearning}
\vspace{-0.2in}
\end{figure*}

\textbf{(Q2. Combining causal loss with standard loss)} We study the combined usage of the standard loss functions, cross entropy for the CHC dataset and the mean squared error for ASIA, CANCER, EARTHQUAKE, and  causal loss during training. We conduct a series of experiments where the causal loss is added to the cross entropy by some factor $\alpha$: 
$\textbf{CombinedLoss}_\alpha := \textrm{StandardLoss} + \alpha * \textrm{CausalLoss}.$

The training setup displayed in Figure~\ref{causalLossSetup} is used, now with combined loss and a varying $\alpha$. We choose the value of $\alpha \in \{1e-3, 5e-3, 1e-2, 2e-2, 0.1, 1.0, 10.0, 100.0\}$ using grid search and report the mean accuracy and standard deviation. Table \ref{tab:neural_net} present the results of neural network trained with the combined loss function. Note that we also tried the combination loss for decision tree learning but it shows no performance improvement. With an increasing weight $\alpha$ of the causal loss we observe a transition in accuracy from that of the neural network baseline to the performance of the CaSPN. We report the results with the best $\alpha$. It can be seen that the model trained with the combined loss function is either better or comparable to the considered baselines. From this experiment it can be concluded that, even for already strong baseline models, the addition of a causal loss to the training process may improve and more importantly does not degrade the model performance. Figures~\ref{combinedLossTrainingCHC} and \ref{combinedDTTraining} shows no synergetic effect for using a combination of Gini index and causal decision score for both decision tree with varying $\alpha$. This answer \textbf{Q2}.

\textbf{(Q3. Training contrasting models.)}. The aim of this experiment is to highlight the fact that the causal loss can be used to improve the predictive modeling power of differentiable but black-box models such as a neural network and non-differentiable but interprtable models such as a decision tree. Driving correlation to imply causation while training these models can provide an advantage of utilization of inherent causal relationships in the data. This can result in the learning of more powerful predictive models while also respecting the causal realm. This answers \textbf{Q3} affirmatively.  



\textbf{(Q4. Lessons from learned decision trees)} 
We analyse the ability of our causal scoring function to construct decision trees. Figure~\ref{dtLearning} shows three decision trees taken from our experiments. The left one is constructed with Gini index, while the right one is built with the proposed causal decision score. In general we observe that our causal decision score builds more a more sophisticated decision tree than Gini index as it makes use of the underlying causal density information of the data. This is also reflected in the predictive accuracies obtained by both sets of decision trees.

An important point to consider here is the comparison of the constructed tree while using the causal loss to the tree constructed from Gini index with scikit (Fig. \ref{dtLearning}(bottom). Although, as the results show that the prediction accuracy of the scikit decision tree is on par with the causal loss decision tree, it is very large. This shows that, using a standard splitting criteria, it requires a much more complex decision tree to obtain the same prediction accuracy as we get learning a very compact tree using the CDS. This answers \textbf{Q4}.

\section{Conclusions}


We presented the causal loss, a simple yet highly effective way to drive correlation-based models towards causality. The main idea is to make use of what we called causal sum-product networks (CaSPNs) that estimate interventional distributions of structural causal models conditioned on observational data. The inclusion of interventional information distinguishes CaSPNs from correlation based models and places them on rung two of the Pearl causal hierarchy.
More importantly, CaSPNs can directly be used as causal loss with any differentiable and even non-differentiable ML model. In doing, they provide meaningful feedback to, e.g., neural networks for classification tasks. Furthermore, causal losses can be used to construct well performing decision trees with a smaller tree sizes than competing algorithms.

While a causal loss can be used on its own for model training, we found in experiments with neural networks that adding a causal loss to a classical training setup improves actually performance. With the introduced weighting term a user can steer the training towards are more causal or correlation based driven model according to the underlying requirement.

A drawback of the causal loss is its dependence on a complete SCM which might not be known for real world data. With a combined loss training the user can balance between the standard and causal loss. However, finding the optimal weighting requires an additional parameter search.





There are several interesting avenues for future work. The CaSPNs act on level two of the Pearl Causal Hierarchy and an interesting direction for enhancement of the causal loss is to lift it onto level three i.e. making it counterfactual. Given such a counterfactual loss, one could imagine to adapt model training to distributions that are not covered by the data set distribution itself. Moreover, one should scale 
learning of decision trees using a causal loss to very large data sets. Here one should also investigate more whether causal decision trees are actually easier to understand for humans due to their causal nature.  


\subsubsection*{Acknowledgements}
This work was supported by the ICT-48 Network of AI Research Excellence Center “TAILOR" (EU Horizon 2020, GA No 952215) and by the Federal Ministry of Education and Research (BMBF; project “PlexPlain”, FKZ 01IS19081). It benefited from the Hessian research
priority programme LOEWE within the project WhiteBox, the HMWK cluster project “The Third Wave of AI.” and the Collaboration Lab “AI in Construction” (AICO).

\bibliography{causalLoss}

\clearpage
\thispagestyle{empty}
\onecolumn
\aistatstitle{Supplementary Materials for\\ The Causal Loss: Driving Correlation to Imply Causation}
\vskip -0.2in plus -1fil minus -0.1in
\setcounter{section}{0}

This supplementary material provides the construction and analysis of the Causal Health Classification data set as well as an inspection of the loss curves of the model trainings from the paper.

The source code accompanying this submission is provided under \url{https://anonymous.4open.science/r/causalLoss-E5F2/}.

\section{The Causal Health Classification Data set}

The causal health classification data set is an extension to the causal health data set by \cite{Zecevic2021a}, where three new binary "Diagnosis" variables, in a one-hot configuration, are introduced. In this case one of the diagnose variables is set to true and the remaining ones are set to false for every data point.

\begin{figure}[h]
\centering
\includegraphics[width=0.2\linewidth]{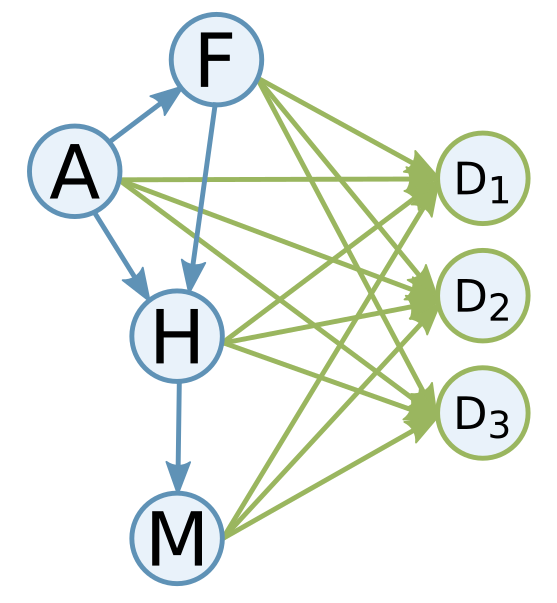}
\caption{The structural causal model of the Causal Health Classification Data set. Variables of the already existing Causal Health data set (blue) with the new diagnosis variables and causal connections (green) added.}
\label{SCMCHC}
\end{figure}

For each sample three multivariate polynomial functions are evaluated to determine the activate diagnose. Each function which depends on a subset of the original variables $A, M, F$ and $H$:
\[
\begin{split}
  & f_1(A) :=
  \begin{cases}
    0.00108 A^3 - 0.08862 A^2 + 1.337 A + 30 + N(0, 10) & \text{if } A\leq45.667\\
     9.09837 + N(0, 10),              & \text{otherwise}
  \end{cases} \\
  & f_2(F,M) := 0.0175F + 0.525M + N(0, 5) \\
  & f_3(A,H) := 0.00013857 A^3 - 0.0135 A^2 + 0.2025 A + 0.2025 H + 17.1714 + N(0, 0.2A)
\end{split}
\]

The state of each diagnose variable $D_i$ is determined by taking the argmax over all three functions.
\[
  f_{Di}(A, F, H, M) := 
  \begin{cases}
    \mathit{true} & \text{if} \argmax(f_1(A), f_2(M,F), f_3(A,H)) = i \\
    \mathit{false} & \text{otherwise}
  \end{cases}
\]

The resulting SCM, as shown in Figure~\ref{SCMCHC}, consists of the Causal Health SCM with three additional Diagnose variables. Connections from $A, F, H$ and $M$ to every $D_i$ are introduced, since $f_{Di}(A, F, H, M)$ depends on all four original variables.

\newpage

The resulting diagnose distribution on the SCM without interventions with all but one of $A, F, H$ and $M$ marginalized out is displayed in Figure~\ref{chcDensity}. The distribution of $D_1$ (blue) is observed to feature a mode which clearly is distinct from $D_2$ (green) and $D_3$ (orange), while $D_2$ and $D_3$ overlap on a larger value range. The overall diagnose probabilities result to 19.91\%, 37.43\% and 42.64\% for $D_1$, $D_2$ and $D_3$ for the whole data set generated from 100.000 samples.

\begin{figure}[h]
\centering
\includegraphics[width=0.45\linewidth]{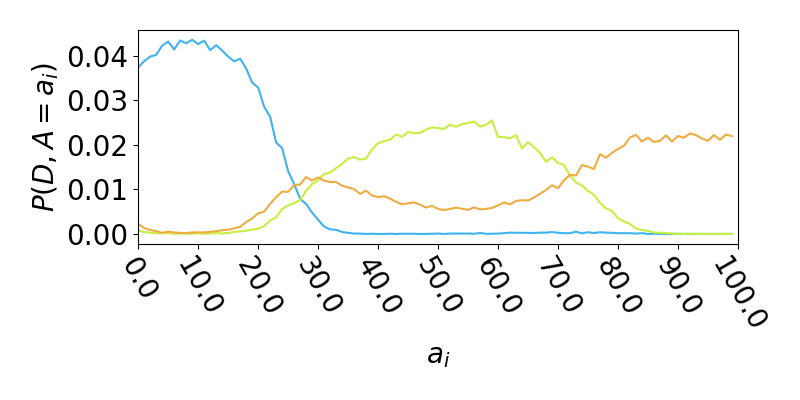}
\includegraphics[width=0.45\linewidth]{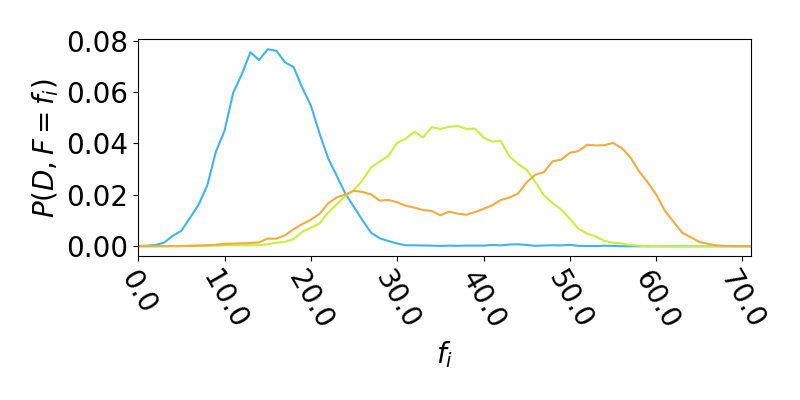}
\\
\includegraphics[width=0.45\linewidth]{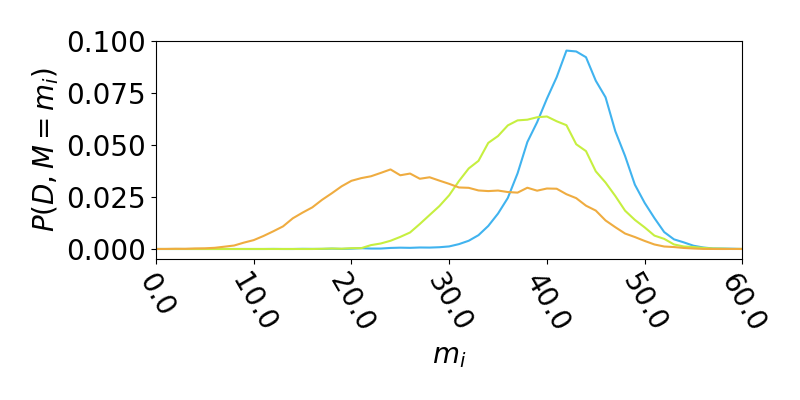}
\includegraphics[width=0.45\linewidth]{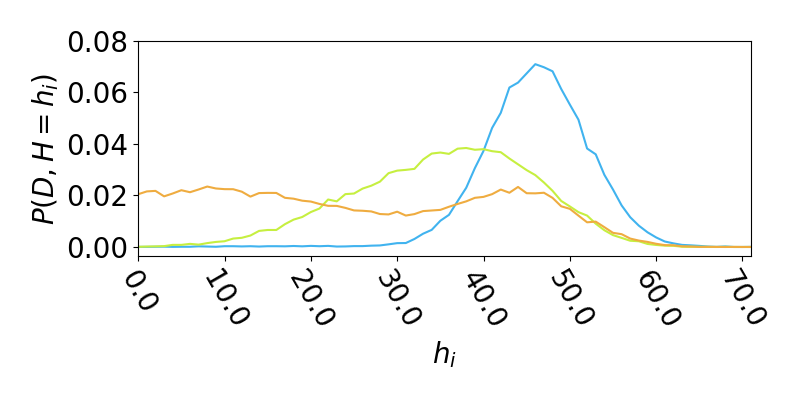}

\caption{Diagnose probability $p(D, V_i=v_i)$ for Causal Health Classification per variable $V_i \in \{A,F,H,M\}$. Class densities for $D_1$ (blue), $D_2$ (green), $D_3$ (orange) are displayed for a data set sampled from the structural causal model without interventions (n=100.000 samples).}

\label{chcDensity}
\end{figure}

\begin{figure}[b]
\centering

\includegraphics[width=0.19\linewidth]{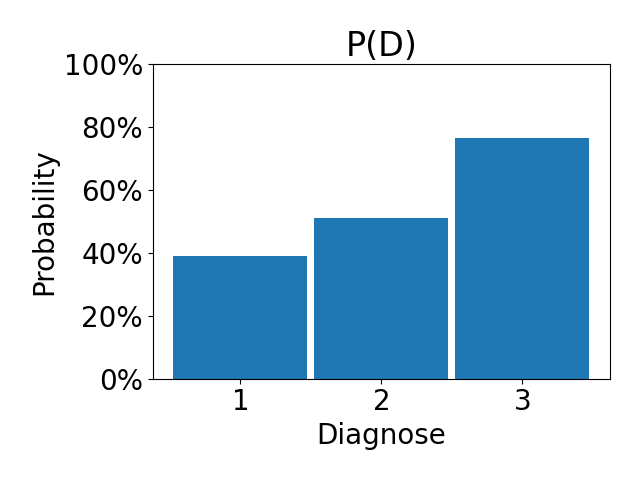}
\includegraphics[width=0.19\linewidth]{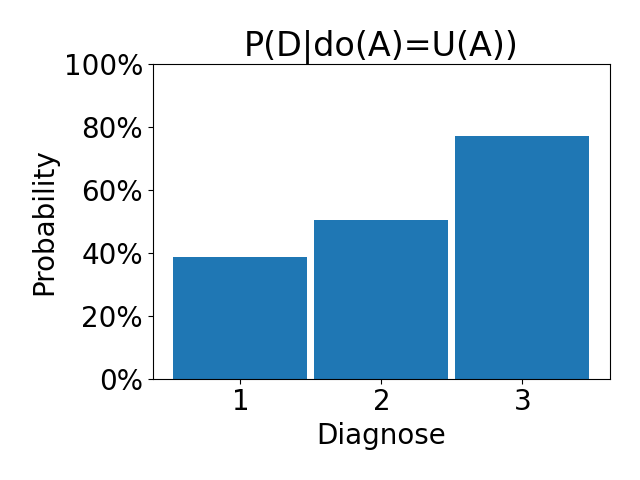}
\includegraphics[width=0.19\linewidth]{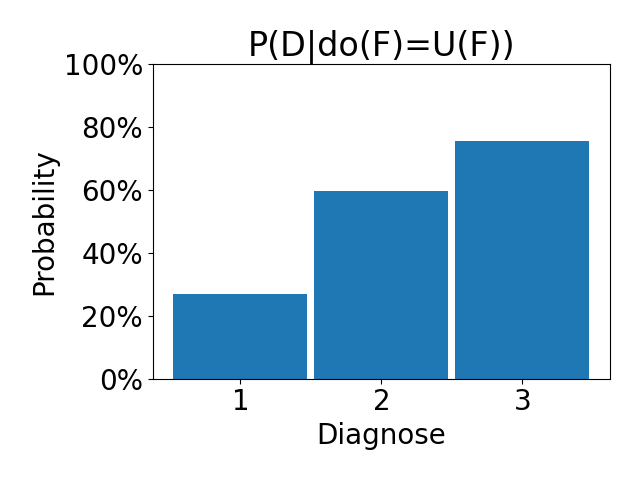}
\includegraphics[width=0.19\linewidth]{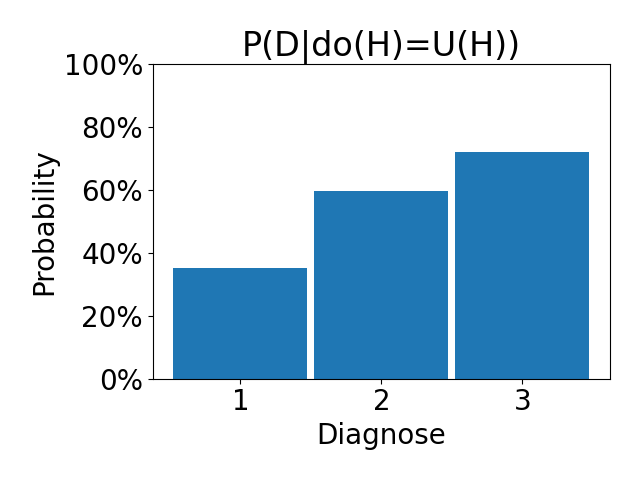}
\includegraphics[width=0.19\linewidth]{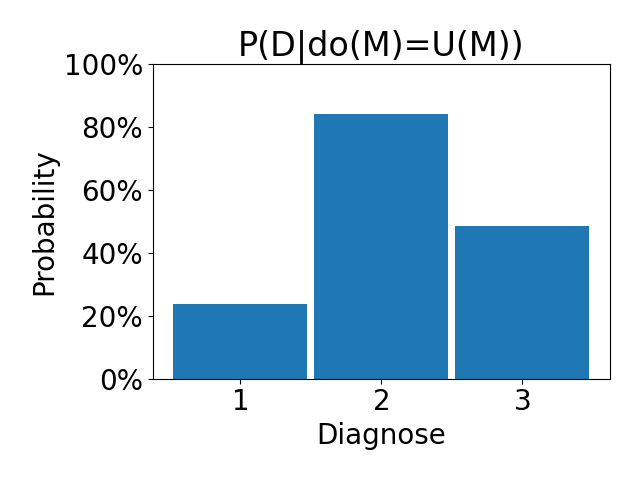}

\caption{Diagnose class distributions for Causal Health Classification data under interventions.}
\label{chcDensityInter}
\end{figure}

The effects of interventions on the diagnose distribution is shown in Figure~\ref{chcDensityInter}. All interventions are carried out to be perfectly atomic, while every intervention sets the affected variable to a uniform distribution.

\section{Learning curves}

\begin{figure}
\centering

\subfloat{{\includegraphics[width=0.38\linewidth]{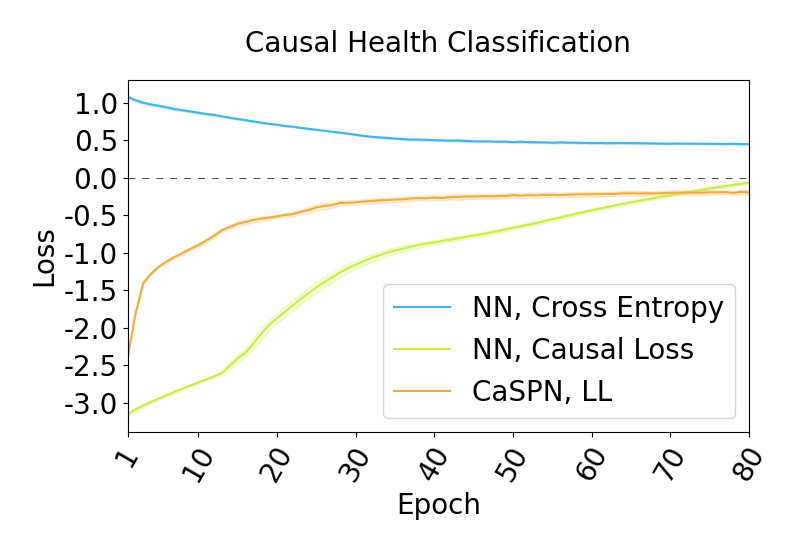}}}
\subfloat{{\includegraphics[width=0.38\linewidth]{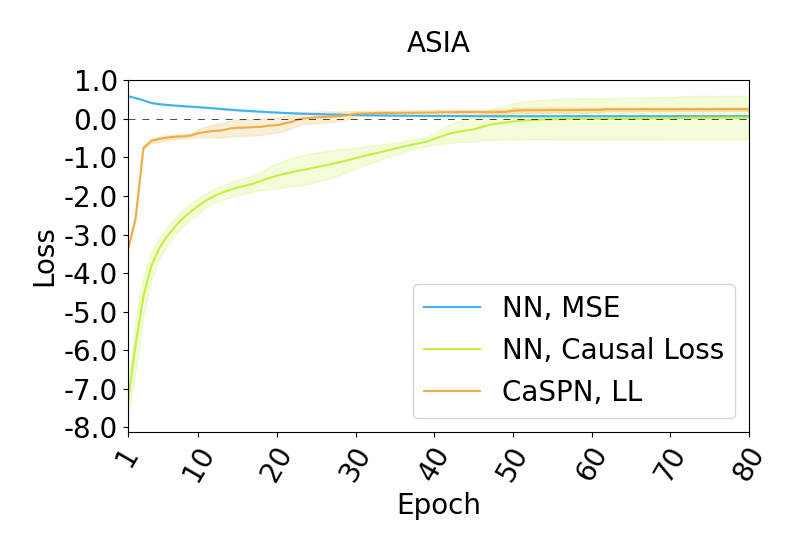}}}
\\
\subfloat{{\includegraphics[width=0.38\linewidth]{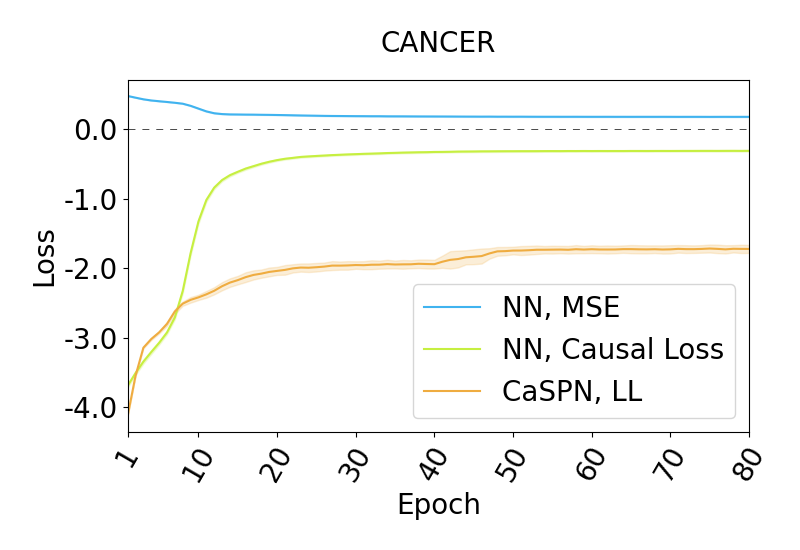}}}
\subfloat{{\includegraphics[width=0.38\linewidth]{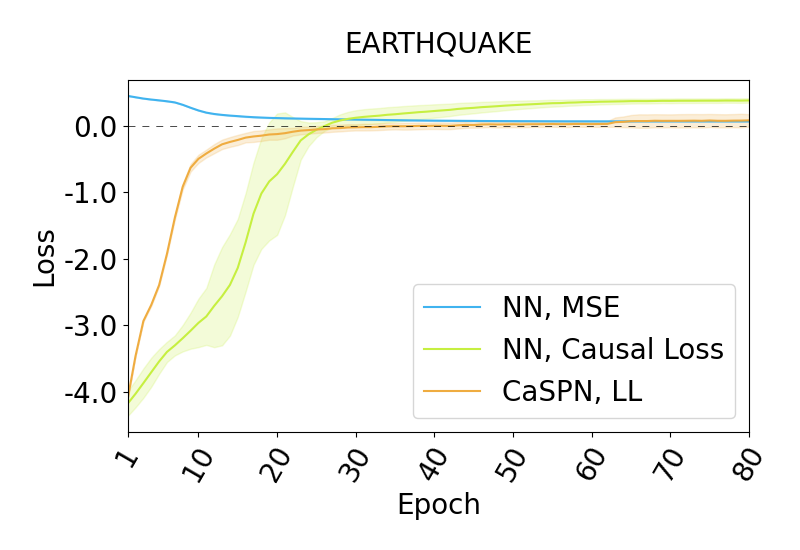}}}

\caption{Loss curves for neural net and CaSPNs trainings. During training the model parameters are optimized to minimize the cross entropy and mean-squared error, while trainings with Causal Loss and log-likelihood are trying to maximize the loss.}

\label{lossCurves}
\end{figure}

Figure~\ref{lossCurves} shows the loss curves of CaSPN training with log-likelihood and neural network trainings with cross entropy, mean-squared error and Causal Loss. We see a faster convergence for CaSPN and NN with cross entropy and MSE trainings, while trainings with neural networks and a Causal Loss tend to converge slower.

Due to the imbalanced class distributions on the ASIA and EARTHQUAKE data sets the CaSPN prediction tends towards a point estimate of the most likely class. This is revealed by the loss curves of the CaSPN predicting a high density for ASIA and EARTHQUAKE. Our CaSPN models the distribution at the leafs using Gaussians. Therefore a density and not a discrete probability is approximated. The standard deviation of the Gaussian distributions in the leaf nodes is bounded by a minimum value of $0.1$ while the most likely mode $M$ is predicted by the weight estimator for the observed data given the intervention information. The CaSPN then approximates the density of the sample point with $\mathcal{N}_{M,0.1}(M) \approx 1.26$ leading to the observed average log-likelihood of $\approx0.23$ in the loss curve.

\end{document}